\colorlet{notegreen}{green!50!black}
\theoremstyle{definition}
\newtheorem{theorem}{Theorem}
\newtheorem{definition}[theorem]{Definition}
\newtheorem{lemma}[theorem]{Lemma}
\newtheorem{example}[theorem]{Example}
\newtheorem*{theorem*}{Theorem}
\newtheorem*{definition*}{Definition}
\newtheorem*{question*}{Question}
\newcommand\argmin{\operatorname*{argmin}}
\DeclarePairedDelimiter{\paren}{\lparen}{\rparen}
\DeclarePairedDelimiterX{\twobarparen}[2]{(}{)}{#1\;\delimsize\|\;#2}
\newcommand{\grad}{\nabla}
\renewcommand{\vec}[1]{\bm{#1}}
\renewcommand{\P}[1]{\mathbb{P}\paren{#1}}
\newcommand{\1}[1]{\mathds{1}\left\{#1\right\}}
\newcommand{\abs}[1]{\left\vert{} #1 \right\vert{}}
\renewcommand{\vec}[1]{\bm{#1}}
\newcommand{\R}{\mathbb{R}}
\newcommand{\norm}[1]{\left\lVert#1\right\rVert}
\DeclareSymbolFont{extraup}{U}{zavm}{m}{n}
\DeclareMathSymbol{\varheart}{\mathalpha}{extraup}{86}
\DeclareMathSymbol{\vardiamond}{\mathalpha}{extraup}{87}
\newtheorem*{claim*}{Claim}
\newcommand{\lmin}{\lambda_{\text{min}}^{\scaleobj{0.75}{\ne0}}}
\newcommand{\lmax}{\lambda_{\text{max}}}
\newcommand{\I}{\mathbb{I}}
\newcommand{\dpw}{\delta_{\text{PW}}}
\newcommand{\sest}{\hat{\vec{s}}}
\newcommand{\f}[1]{f_{\vec{#1}}(\vec{x})}
\title{Lottery Tickets in Linear Models: An Analysis of Iterative Magnitude Pruning}
\author{%
    Bryn Elesedy\thanks{Correspondence to \href{mailto:bryn@robots.ox.ac.uk}{\texttt{bryn@robots.ox.ac.uk}.}}\\
    Department of Computer Science\\
    University of Oxford
    \And
    Varun Kanade\\
    Department of Computer Science\\
    University of Oxford
    \And
    Yee Whye Teh\\
    Department of Statistics\\
    University of Oxford
}
\begin{document}

\maketitle

\begin{abstract}
    \noindent We analyse the pruning procedure behind the lottery ticket
    hypothesis~\citet{frankle2018lottery},
    \emph{iterative magnitude pruning} (IMP), when applied to linear models trained by
    gradient flow.
    We begin by presenting sufficient conditions on the statistical structure of
    the features under which IMP prunes those features that have smallest projection onto the data.
    Following this, we explore IMP as a method for sparse estimation.
\end{abstract}

\section{Introduction}
The lottery ticket hypothesis~\cite{frankle2018lottery} asserts that a randomly
initialised, densely connected feed-forward neural network contains
a sparse sub-network that, when trained in isolation,
attains equal or higher accuracy than the full network.
These sub-networks are called \emph{lottery tickets} and
the method used to find them is iterative magnitude pruning (IMP).
A network is given a random initialisation, trained by some form of gradient descent for a
specified number of
iterations and a proportion of its smallest weights (by absolute
magnitude) are deleted. The remaining weights are then reset to their
initialised values and the network is retrained. This procedure can be
performed multiple times, resulting in a sequence of sparse yet trainable
sub-networks.\footnote{These sub-networks are \emph{weight} sub-networks, 
    formed by setting to 0 entries of the weight matrices (edges in the graph).
    This is in contrast to the neuron pruning, which removes entire neurons 
    (nodes in the graph). All mentions of sub-network in this paper refer to 
weight sub-networks.}

This simple procedure gives quite surprising results. 
The sub-networks uncovered by IMP are trainable from their original initialisation
and achieve accuracies comparable with, and often better than, the full network.
The same sub-networks perform poorly when reinitialised~\cite{frankle2018lottery}.
Moreover, as observed in~\cite{zhou2019deconstructing},
many of the sub-networks found by IMP have better than random test loss at initialisation,
suggesting that IMP has some ability to select good inductive biases for a problem.
These results point to avenues for better computational and memory efficiency in neural networks,
as well as to properties of deep networks and their training dynamics
that we do not fully understand.
In particular, it is not yet known why weight magnitude provides a good signal
on which to base a pruning heuristic in neural networks.
In other words, why is IMP effective in neural networks?

Our paper aims to address this question for linear models,
with the hope that this will lay the groundwork for similar study in neural networks.
We make a minor adaptation to the version of IMP used in~\cite{frankle2018lottery},
shown in Algorithm~\ref{alg:IMP},
the difference being that in Algorithm~\ref{alg:IMP} we consider pruning one weight per iteration,
while in~\cite{frankle2018lottery} a proportion of the weights are pruned.
It is straightforward to extend our results to the latter case.
It is similarly easy to modify them to discrete time gradient descent
with appropriate step sizes.



\begin{algorithm}
  \DontPrintSemicolon
  \KwIn{
  Loss function $L: \R{}^p \to \R$,
  training time $T \in \R{}_+$,
  initialisation $\vec{w}^{\text{init}} \in \R{}^p$,
  iterations of pruning $q < p$.
  }
  \KwOut{$\vec{w}^{(q)}(T)$}
  Set $M = \I_p$\;
  \For{$k = 0$ to q}{
      Initialise $\vec{w}^{(k)}(0) = M\vec{w}^{\text{init}}$\;
      Train $\dot{\vec{w}}^{(k)}(t) = - M \grad L(\vec{w}^{(k)}(t))$ for $t\in[0, T]$\;
      Set $i = \argmin_{j \in [p]} \left\{\abs{w^{(k)}_j(T)} : M_{jj} = 1\right\}$\;
      Set $M_{ii} = 0$\;
  }
  \Return $\vec{w}^{(q)}(T)$\;
  \caption{Iterative Magnitude Pruning}\label{alg:IMP}
\end{algorithm}

\subsection{Related Work}\label{sec:related-work}
The lottery ticket hypothesis~\cite{frankle2018lottery} has sparked a lot of interest. 
Already, there are more follow up works than it is possible to cover in this article.
To name a few: more reliable discovery of lottery tickets in deep networks with rewinding~\cite{franklestabilizing},
generalising lottery tickets across tasks~\cite{morcos2019one},
constructing networks that perform well with random 
weights~\cite{zhou2019deconstructing,gaier2019weight},
pruning at initialisation~\cite{hayou2020robust}
and finding lottery tickets in randomly weighted networks~\cite{ramanujan2020hidden}. 
For an empirical comparison of various pruning methods see~\cite{blalock2020state}.

Theoretical work has mainly focussed on lottery tickets at at initialisation.
\citet{malach2020proving}
showed that any continuous function
can be approximated by a sub-network of a sufficiently large, randomly weighted neural network.
Following this, there is a body of work on the approximation
properties of lottery tickets at initialisation~\cite{orseau2020logarithmic,pensia2020optimal}.
The theory relating to IMP is relatively undeveloped, which is a key motivation for this paper.

We note some similarities between IMP and methods in compressive sensing,
which is a subfield of signal processing that attempts to find sparse reconstructions 
of noisy signals.
For a systematic review see~\cite{rani2018systematic} and
for a comprehensive discussion of thresholding and similar denoising techniques 
see~\cite[Chapter 11]{mallat2008wavelet}.
Most similar to IMP are two well known thresholding methods that we discuss below,
each of which attempts to recover a sparse
signal $\vec{s} \in \R{}^p$ from noise corrupted measurements $\vec{y}\in \R{}^n$.

Hard thresholding, studied in
the wavelet basis by~\citet{donoho1994ideal}, performs least-squares estimation
$\sest$ of $\vec{s}$ and then applies elementwise to
$\sest$ the thresholding operator $H_\tau(z) = \1{\abs{z} > \tau}z$.
Iterative hard thresholding~\cite{blumensath2008iterative}
consists of linear projection of $\vec{s}$ onto a feature 
matrix $\Psi \in \R{}^{n \times p}$ and then iteratively solving
the linear system $\vec{y} = \Psi \vec{s}$, thresholding the solution at each stage. 
Specifically, iterative hard thresholding estimates $\vec{s}$ by $\sest$ using the update rule
$
    \sest_i^n = H_\tau(\sest^{n}_i + \eta\Psi^\top (\vec{y} - \Psi \sest^{n}_i))
    $
with initialisation chosen by the user and where $\eta$ is a step size.
The argument of $H_\tau$
is exactly the update from gradient flow on the squared error loss 
$\frac12 \norm{\vec{y} - \Psi \sest}_2^2$.
IMP can therefore be viewed as a variation
of iterative hard thresholding, where the threshold operator
is replaced by the restriction to a subset
of indices that is chosen at time $T$ of each training run.

\subsection{Setup and Notation}\label{sec:notation}
We write $[n]$ for the set $\{1, \dots, n\}$.
Let the training inputs be
$\vec{x}_1,\, \dots,\, \vec{x}_n$ and targets be $y_1, \, \dots,\, y_n\in \R$. 
For the parameters we write $\vec{w} \in \R{}^p$ and the 
features are written $\phi_i(\vec{x})$ for $i \in [p]$.
In this work we consider the training data and features to be deterministic,
but it is possible to extend our main result to a random design setting.
Let $X \in
\R{}^{n\times d}$ and $\vec{y} \in \R{}^n$ be the usual row stacking of the
training examples and write $\Phi \in \R{}^{n\times p}$ for the row stacking of
the features. For each $i \in [p]$, write $\phi_i(X) \in \R{}^{n}$ for the
vector with components $(\phi_i(\vec{x}_1), \, \dots , \,
\phi_i(\vec{x}_n))^\top$.
A linear model is any predictor $\f{\theta} = \vec{\theta}^\top \vec{\phi}(\vec{x})$
that is linear in the features, where $\vec{\theta} \in \R{}^p$ are the learned parameters.
The features can be arbitrary non-linear functions, for instance, 
the outputs of the penultimate layer of a pre-trained neural network (with earlier weights fixed) or 
gradients of a neural network at initialisation as with linearised networks~\cite{lee2019wide}.
Write $\Sigma = \frac1n \Phi^\top \Phi$ for the
empirical covariance matrix and notice that $\Sigma_{ij} = \frac1n \phi_i(X)^\top \phi_j(X)$.
We write $\Sigma^+$ for the Moore-Penrose pseudo-inverse of $\Sigma$ and
$\lmax = \norm{\Sigma}_2$ for its operator norm.\footnote{%
        We define the Moore-Penrose pseudo-inverse of $\Sigma$ as the
        matrix that is diagonal in any basis in which $\Sigma$ is diagonal
        and has eigenvalues $\gamma_i$ $i=1,\dots, p$ where $\gamma_i = 0$ if 
        $\lambda_i = 0$ and $\gamma_i = 1/\lambda_i$ otherwise, where 
        $\lambda_i$ $i=1,\dots, p$ are the eigenvalues of $\Sigma$.%
    }
For any matrix $A$, let $\lmin(A)$ denote the smallest non-zero eigenvalue of $A$.
Recall that at each iteration of Algorithm~\ref{alg:IMP} the
diagonal matrix $M \in \R{}^{p\times p}$ records the weights pruned so far, that is $M_{ij} =
\1{i = j \, \land \, w_i \text{ not yet pruned}}$.
Pruning can be seen as either removing features entirely (so reducing the dimension of
$\Phi$) or setting to 0 the corresponding weights or column of $\Phi$ (but preserving all the dimensions).
For the work in this paper these perspectives are equivalent and we will use 
each description interchangeably.

\section{Warm Up: Pruning Heuristic of IMP}\label{sec:warm-up}
In this section we give an analysis to demonstrate that, given some statistical
assumptions on the features, IMP preferentially prunes the features that explain
the data the least in terms of linear projections.
We call this pruning heuristic the \emph{alignment heuristic}.
Specifically, the alignment heuristic prunes $w_i$ 
where $i = \argmin_{j}\{\abs{\phi_j(X)^\top \vec{y}}: M_{jj} = 1\}$.
We consider examples of various $\Sigma$ and examine the pruning heuristic that
arises. For now we assume that $\Sigma$ is full rank, so training
to convergence with $L(\vec{w})=\frac{1}{2n}\norm{\Phi \vec{w} - \vec{y}}_2^2$ 
gives
$
    \vec{w}^{(i)}(\infty) = \frac1n \Upsilon_{i}^{-1} P_{i}^\top \vec{y}
$
where $\Upsilon_i$ and $P_i$ are, respectively, the restrictions of $\Sigma$ and $\Phi$ to the parameters
not yet pruned at iteration $i$ of Algorithm~\ref{alg:IMP}.
$\Upsilon_i$ will always be invertible
as long as $\Sigma$ is invertible, 
which follows from the Cauchy interlace
theorem and $\Sigma$ being positive definite.
We will relax our
notation back to $\Sigma$ and $\Phi$ for the following examples, with the
understanding that each of these results apply
for each iteration of Algorithm~\ref{alg:IMP}, restricting the matrices accordingly.
In particular, it should be noted that in the following examples the conditions
on $\Sigma$ transfer to conditions on $\Upsilon_i$ and $P_i$. Finally, for the rest of this
section we normalise the features so that $\Sigma_{ii} = 1$ $\forall i \in [p]$.

\begin{example}[$\Sigma = \I$]\label{example:identity}
    Gradient flow converges to $n \, w_i(\infty) = \Sigma^{-1} \Phi^\top \vec{y}= \phi_i(X)^\top \vec{y}$.
    According to Algorithm~\ref{alg:IMP} the weight then pruned is $w_i$ where
    $i = \argmin_{j \in [p]} \{\vert{}w^{(i)}_j(\infty)\vert{} : M_{jj} = 1\}$.
    We see immediately that this is equivalent to the alignment heuristic.
\end{example}

\begin{example}[Uniform correlations]\label{example:uniform}
    We express 
    \[
        \Sigma = \I + \alpha (\vec{1}\vec{1}^\top - \I)
    \]
    for $\alpha \in (0, 1)$,
    where $\vec{1} = {(1, 1, \dots, 1)}^\top \in \R^p$.
    We can use the Sherman-Morrison formula to calculate
    \[
        \Sigma^{-1} = \frac{1}{1 - \alpha}\I - \frac{\alpha(1 - \alpha)}{1 + \alpha (p-1)}\vec{1}\vec{1}^\top
    \]
    and we see that
    \[
            n \, w_i(\infty) 
            = \frac{1}{1 - \alpha}\phi_i(X)^\top\vec{y} -
            \frac{\alpha(1 - \alpha)}{1 + \alpha (p-1)}\sum_{j}\phi_j(X)^\top\vec{y}.
    \]
    We conclude that, if $\alpha \ll 1$ is small enough, then Algorithm~\ref{alg:IMP}
    will prune according to the alignment heuristic.
    The case $\alpha \approx 1$ gives this outcome too, but in this case $\Sigma$ 
    is barely invertible. Intuitively, in the case $\alpha \approx 1$ the features are similar
    and pruning one is as good as pruning another.
\end{example}

\begin{example}[Pairwise incoherence]\label{example:pw}
    We say that $\Phi$ satisfies the pairwise incoherence assumption with parameter $\dpw$ if
    \[
        \dpw = \max_{i, j}\abs{\frac1n (\Phi^\top\Phi)_{ij} - \1{i = j}}.
    \]
    Assume that this is the case. Write $\Sigma = \I - A$, then formally we have 
    \[
        \Sigma^{-1} = (\I - A)^{-1} = \sum_{j =0}^\infty A^j.
    \]
    Note that 
    $
    a \coloneqq \norm{A}_2 \leq \sqrt{\norm{A}_1\norm{A}_\infty} \le (p-1)\, \dpw
    $ by H{\"o}lder's inequality, 
    where $\norm{\cdot}_2$ denotes the operator norm when acting on a matrix.
    Taking $\dpw < 1/(p-1)$ therefore ensures that the series converges in operator norm.
    Also, provided the above series converges, we have
    \[
        \norm{\sum_{k=2}^\infty A^k}_2 \leq \frac{a^2}{1 - a}.
    \]
    If $a \ll 1$ we can write $\Sigma^{-1} \approx \I + A$. This gives
    \[
        n\, w_i(\infty) \approx \phi_i(X)^\top \vec{y} + \sum_{j\ne i} A_{ij}\phi_j(X)^\top \vec{y}.
    \]
    The alignment heuristic will be followed if the magnitude of the first term dominates
    that of the second, which is $O(\dpw)$. Therefore, given a suitable pairwise incoherence
    assumption, Algorithm~\ref{alg:IMP} prunes according to the alignment heuristic.
\end{example}

\section{Support Recovery with IMP}\label{sec:support-recovery}
Empirically, IMP has been found to result in sparse yet performant sub-networks.
In this section we explore this phenomenon analytically in the context of linear models,
where the natural application is to sparse estimation.
In our analysis we will make use of a strengthened form of
the restricted nullspace property~\cite{wainwright2019high}.
\begin{definition}[Orthogonal nullspace property]
    Let $S \subset [p]$ be a subset of indices. Define the cone
    \[
        C(S) = \{\vec{x} \in \R{}^p: \norm{\vec{x}_{S^\text{c}}}_1 \le \norm{\vec{x}_{S}}_1\}
    \]
    where $\vec{x}_{Q}$ are the components of $\vec{x}$ with indices in $Q$. A matrix
    $A$ acting on $\R{}^p$
    satisfies the \emph{orthogonal nullspace property} with respect to $S$ if
    all elements of the nullspace of $A$ are orthogonal to all elements of $C(S)$
    \[
        \text{null}(A) \perp C(S).
    \]
\end{definition}
Notice that the cone $C(S)$ is the set of vectors whose 1-norm on the index set
$S$ dominates that on the other indices $S^\text{c}$. In particular, a sparse signal
supported on $S$ will belong to $C(S)$. In the context of Theorem~\ref{thm:imp-estimation}, the above 
condition
ensures that the signal $\vec{s}$ is recoverable.
We now present our sparse estimation result for Algorithm~\ref{alg:IMP}.
\begin{theorem}[Sparse Estimation with IMP]\label{thm:imp-estimation}
    Assume that $\vec{y} = \Phi \vec{s} + \vec{\xi}$ 
    for $\vec{s} \in \R{}^p$ and that $\{\xi_i$: $i=1, \dots, p\}$ are independent, zero mean
    sub-Gaussian random variables
    with variance proxy $\sigma^2$.
    Let $\Sigma \coloneqq \frac1n \Phi^\top \Phi$ and let $\lmin > 0$ 
    be the smallest non-zero eigenvalue of $\Sigma$, which we assume to exist. 
    Suppose that $\vec{s}$ is $k$-sparse and supported on a set 
    $S \subset [p]$ with $\abs{S} = k$, and that
    $\Sigma$ satisfies the orthogonal nullspace property with respect to $S$.
    Let $L(\vec{w})=\frac{1}{2n}\norm{\Phi \vec{w} - \vec{y}}_2^2$ be the mean squared
    error loss.
    Consider running Algorithm~\ref{alg:IMP} with $L$, $\vec{w}^{\text{init}} = \vec{0}$,
    $T=\infty$ and $q \le p - k$, and denote the output by $\vec{v}$. 
    Let $\gamma > 0$, if 
    \[  
        n \geq \frac{8\sigma^2}{\gamma^2 \lmin}\log(2p/\delta)
    \]
    then, with probability at least $1 - \delta$ we have 
    \begin{enumerate}[(i)]
        \item $\vec{v}$ is at least $(p-q)$-sparse. 
        \item No false exclusion above $\gamma$: $v_i \neq 0$ for any $i$
            with $\abs{s_i} \ge \gamma$.
    \end{enumerate}
\end{theorem}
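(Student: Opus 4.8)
The plan is to handle the two conclusions separately. Conclusion (i) is essentially bookkeeping: over iterations $0,\dots,q-1$ the mask $M$ acquires exactly $q$ zero diagonal entries, and the flow in the last iteration --- which produces the returned vector $\vec{v}=\vec{w}^{(q)}(\infty)$ --- is confined to the coordinate subspace spanned by the at most $p-q$ surviving indices, so $\vec{v}$ has at most $p-q$ nonzero coordinates.

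For (ii) the first step is to make the trained weight explicit. As recalled in Section~\ref{sec:warm-up}, since $\vec{w}^{\text{init}}=\vec{0}$, gradient flow on the MSE loss converges, at iteration $m$, to the restricted minimum-norm least-squares solution $\vec{w}^{(m)}(\infty)=\tfrac1n\Upsilon_m^{-1}\Phi_{U_m}^\top\vec{y}$, where $U_m$ is the set of surviving indices and $\Upsilon_m$ the corresponding principal submatrix of $\Sigma$. I would first observe that the orthogonal nullspace property, as stated, forces $\Sigma$ to be invertible --- hence positive definite --- since the cone $C(S)$ spans $\R^p$ whenever $S\neq\emptyset$; consequently every $\Upsilon_m$ is invertible and, by Cauchy interlacing, $(\Upsilon_m^{-1})_{ii}\le 1/\lmin$. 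Substituting $\vec{y}=\Phi\vec{s}+\vec{\xi}$ and using that $\vec{s}$ is supported on $S$ (so $\Phi\vec{s}$ lies in the column span of $\Phi_{U_m}$ whenever $S\subseteq U_m$, where $\Upsilon_m^{-1}$ inverts the restricted Gram map exactly), one gets the decomposition $\vec{w}^{(m)}(\infty)=\vec{s}|_{U_m}+\tfrac1n\Upsilon_m^{-1}\Phi_{U_m}^\top\vec{\xi}+(\text{an aliasing term, present only if some index of }S\text{ has already been pruned})$.

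Next, define the good event $\mathcal E=\{\,\|\tfrac1n\Sigma^{-1}\Phi^\top\vec{\xi}\|_\infty<\gamma/2\,\}$. Each coordinate of $\tfrac1n\Sigma^{-1}\Phi^\top\vec{\xi}$ is a centered sub-Gaussian with variance proxy $\tfrac{\sigma^2}{n}(\Sigma^{-1})_{ii}\le\tfrac{\sigma^2}{n\lmin}$, so a union bound over the $p$ coordinates together with the hypothesis on $n$ give $\P{\mathcal E}\ge 1-\delta$; this is where the constant $8$ and $\log(2p/\delta)$ come from. On $\mathcal E$ I would run an induction over $m=0,1,\dots,q$ with the invariant: (a) no index of $S_\gamma:=\{i\in S:|s_i|\ge\gamma\}$ has been pruned before iteration $m$, and (b) $\|\vec{w}^{(m)}(\infty)-\vec{s}\|_\infty<\gamma/2$. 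The base case $m=0$ is exactly the definition of $\mathcal E$. Given (a), (b) at step $m\le q-1$: at most $m<p-k=|S^\text{c}|$ indices have been pruned so far, so some index of $S^\text{c}$ survives, and its fitted weight equals the noise term at that coordinate, hence is below $\gamma/2$ by (b); meanwhile every index of $S_\gamma$ has fitted weight at least $|s_i|-\gamma/2\ge\gamma/2$ by (b); therefore $\argmin_{j\in U_m}|w^{(m)}_j(\infty)|\notin S_\gamma$, which is (a) at step $m+1$. Finally, taking $i\in S_\gamma$ and $m=q$, conclusion (ii) follows from $|v_i|\ge|s_i|-\gamma/2\ge\gamma/2>0$.

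The part requiring real work is propagating (b), i.e.\ controlling the $\ell_\infty$ perturbation of the restricted least-squares solution from one iteration to the next. Removing the index $i_m=\argmin_{j\in U_m}|w^{(m)}_j(\infty)|$ changes the surviving coefficients by the downdate identity $w^{(m+1)}_i(\infty)=w^{(m)}_i(\infty)+w^{(m)}_{i_m}(\infty)\,\delta_i$ with $\delta=\Phi_{U_{m+1}}^+\phi_{i_m}(X)$ and $\|\delta\|_2\le\lmin^{-1/2}$; here $|w^{(m)}_{i_m}(\infty)|$ is the per-iteration minimum and so is below $\gamma/2$ on $\mathcal E$ by (b). The delicate point is that $i_m$ can genuinely lie in $S\setminus S_\gamma$ --- a pruned coordinate need only have small \emph{fitted} weight, not small $s_i$ --- which activates the aliasing term of the decomposition above, and one must argue that these contributions, together with the noise term, stay below $\gamma/2$ across all iterations rather than accumulating. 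Combining the smallness of every pruned fitted weight with the fact that the orthogonal nullspace property keeps the signal component exact on the surviving block is what I expect to carry this step; the rest --- the sub-Gaussian tail bound, Cauchy interlacing, and the counting of $|S^\text{c}|$ against the number of prunings --- is routine.
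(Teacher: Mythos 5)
Your reduction of conclusion (ii) to an induction over iterations has the right shape, and you correctly flag the two danger points (indices of $S\setminus S_\gamma$ getting pruned, and control of the noise beyond the first iteration), but the mechanism you propose for the inductive step does not close the argument. Your good event $\mathcal E$ constrains only $\frac1n\Sigma^{-1}\Phi^\top\vec{\xi}$, whereas after $m$ prunings the relevant noise term is $\frac1n\Upsilon_m^{-1}P_m^\top\vec{\xi}$ for the restricted matrices; this is not controlled coordinatewise by the first-iteration noise, so invariant (b) at step $m+1$ does not follow from $\mathcal E$ and (b) at step $m$ without a quantitative perturbation bound. The downdate identity you invoke supplies such a bound, but it is too weak: each pruning moves the surviving coordinates by up to $\abs{w^{(m)}_{i_m}(\infty)}\,\norm{\delta}_2\lesssim(\gamma/2)\,\lmin^{-1/2}$, and over $q$ iterations these drifts can accumulate to order $q\gamma\lmin^{-1/2}$, which cannot be kept below $\gamma/2$ in general (and your bound on $\norm{\delta}_2$ additionally uses the normalisation $\Sigma_{ii}=1$, which is not assumed in the theorem). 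You acknowledge that the non-accumulation ``requires real work'' and ``I expect'' it to go through, but no argument is given; this is the genuine gap.

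The paper avoids accumulation entirely by never comparing consecutive iterations: at iteration $m$ it writes the converged solution in closed form as $\frac1n\Upsilon^{+}P^\top\vec{y}=\vec{s}\vert_{U}+\frac1n\Upsilon^{+}P^\top\vec{\xi}$ and applies Lemma~\ref{lemma:noise-control} afresh to the restricted matrices. The coordinates of the iteration-$m$ noise term are sub-Gaussian with variance proxy $\frac{\sigma^2}{n}(\Upsilon^{+})_{ii}\le\frac{\sigma^2}{n\,\lmin(\Upsilon)}$, and Cauchy interlacing gives $\lmin(\Upsilon)\ge\lmin$, so the single sample-size condition in the statement controls every iteration; no cross-iteration perturbation bound is needed. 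Two of your side observations are worth keeping: the orthogonal nullspace property does force $\Sigma$ to be invertible, since $C(S)$ spans $\R^p$ for nonempty $S$, and the aliasing term created when an index of $S\setminus S_\gamma$ is pruned is a real issue that the paper's one-line appeal to the nullspace condition does not fully dispose of. But neither observation substitutes for the per-iteration closed-form analysis that your proposal is missing.
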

\begin{proof}
    Point (i) is obvious. We establish point (ii).
    Consider the first iteration of Algorithm~\ref{alg:IMP}.
    Let $V$ be the column space of $\Sigma$. Since $\Sigma$ satisfies the orthogonal nullspace
    condition
    with respect to the support of $\vec{s}$, we know that the projection of $\vec{s}$ on to 
    $V$ is $\vec{s}$. 
    By diagonalising, we can see that $\Sigma^{+} \Sigma$ is
    exactly this projection. Training until convergence therefore gives
    \[
        w_i(\infty) = s_i + \frac1n (\Sigma^{+} \Phi^\top \vec{\xi})_i.
    \]
    Then on this iteration we can be sure not to prune any $i$ with $\abs{s_i} \ge \gamma$
    if
    \[
        \frac1n \abs{\left(\Sigma^{+} \Phi^\top \vec{\xi}\right)_i} < \gamma / 2 \quad 
        \forall \, i.
    \]
    By Lemma~\ref{lemma:noise-control}, this happens with probability at least
    $1 - \delta$ if 
    $
        n \geq \frac{8\sigma^2}{\gamma^2 \lmin}\log(2p/\delta).
    $
    To demonstrate (ii), we show that this is sufficient to guarantee (with high probability)
    that
    no $i$ with $\abs{s_i} \ge \gamma$ is pruned on any iteration
    of Algorithm~\ref{alg:IMP}. We conclude the proof with this argument.

    Let $\Upsilon$ be the 
    sub-matrix of $\Sigma$ formed by removing the rows and columns with indices
    $i_1, \dots, i_k$ $k < p$, let $P$ be the sub-matrix of $\Phi$ by removing
    the same columns. The orthogonal nullspace condition on $\Sigma$
    means that the restriction of $\vec{s}$ has no
    component in the null of $\Upsilon$ at any iteration. 
    Hence, we need only show that
    \[
        \frac1n\abs{(\Upsilon^{+} P^\top \vec{\xi})_i} < \gamma / 2.
    \]
    Again by Lemma~\ref{lemma:noise-control}, this happens with probability at least
    $1 - \delta$ if 
    $
    n \geq \frac{8\sigma^2}{\gamma^2 \lmin(\Upsilon)}\log(2p/\delta).
    $
    $\Sigma$ is symmetric, so we may apply Cauchy's interlace theorem 
    (see~\cite{horn2012matrix}, Theorem 4.3.17) to obtain
    $\lmin(\Upsilon) \ge  \lmin(\Sigma)$. The proof is complete.
\end{proof}
This result tells us that if we have $n = O(\gamma^{-2} \log(2p/\delta))$ samples
then, with high probability, IMP can recover the support of a sparse signal
wherever it has magnitude at least twice the noise level $\nicefrac{\gamma}{2}$.
It is not possible to place guarantees on
recovery of components with magnitude smaller than $\gamma$ without
assumptions on the relative sizes of the components of $\vec{s}$.
It should be straightforward to modify~\cref{thm:imp-estimation}
to account for random features by a bound on $\lmin$.

\section{Discussion}
We have shown that IMP can recover the support of a sparse signal under mild
assumptions on the design matrix and we gave bounds for the estimation
error in this setting. 
Further work may seek to extend the results of this paper to 
neural networks, a tractable route for which
might be found using linearised networks~\cite{lee2019wide}.
Alternatively, it may be interesting to consider random features.

\begin{ack}
    We thank Sheheryar Zaidi and Bobby He for helpful feedback on early versions of this work.
    We would also like to thank Zhanyu Wang for pointing out two errors in a previous version of this work.
    BE is supported by the UK EPSRC CDT 
    in Autonomous Intelligent Machines and Systems (grant reference EP/L015897/1).
    VK is supported
    in part by the Alan Turing Institute under the EPSRC grant EP/N510129/1.
\end{ack}

\appendix

\section{Sub-Gaussian Concentration}\label{sec:technical-results}
\begin{lemma}\label{lemma:noise-control}
    Define $\Phi \in \R{}^{n \times p}$ and $\Sigma \in \R{}^{p\times p}$ with $\Sigma^{+}$ 
    the Moore-Penrose pseudo-inverse of $\Sigma$ and $\lmin$ the non-zero eigenvalue of
    $\Sigma$ that is smallest in absolute magnitude (which we assume exists).
    Let $\vec{\xi} \in \R{}^n$ be a vector with elements $\xi_i$ that are
    iid sub-Gaussian with zero mean and variance proxy $\sigma^2$. If
    $
        n \ge \frac{2 \sigma^2}{\epsilon^2 \lmin} \log (2p / \delta)
        $
    then with probability at least $1- \delta$ we have 
    $
        \max_j \abs{\frac1n ( \Sigma^{+} \Phi^\top \vec{\xi})_j} < \epsilon
        $.
\end{lemma}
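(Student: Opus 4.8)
The plan is to recognise that each coordinate of the random vector $\frac1n \Sigma^{+}\Phi^\top \vec{\xi}$ is a fixed linear functional applied to the noise $\vec{\xi}$, so that sub-Gaussianity is preserved with a variance proxy we can read off from $\Sigma^{+}$; then to control that variance proxy by a single eigenvalue bound; and finally to take a union bound over the $p$ coordinates and solve the resulting inequality for $n$. Throughout I use $\Sigma = \frac1n \Phi^\top\Phi$ as in the setup.

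Fix $j \in [p]$ and let $\vec{a}_j \in \R^n$ denote the $j$-th row of $\frac1n \Sigma^{+}\Phi^\top$, equivalently $\vec{a}_j = \frac1n \Phi\Sigma^{+}\vec{e}_j$ since $\Sigma^{+}$ is symmetric, so that $\frac1n(\Sigma^{+}\Phi^\top\vec{\xi})_j = \vec{a}_j^\top\vec{\xi}$. Because the $\xi_i$ are independent, zero-mean, and sub-Gaussian with variance proxy $\sigma^2$, the linear combination $\vec{a}_j^\top\vec{\xi}$ is sub-Gaussian with variance proxy $\sigma^2\norm{\vec{a}_j}_2^2$, and hence
\[
    \P{\abs{\vec{a}_j^\top\vec{\xi}} \ge \epsilon} \le 2\exp\paren*{-\frac{\epsilon^2}{2\sigma^2\norm{\vec{a}_j}_2^2}}.
\]

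The key step is the variance bound. Using $\Phi^\top\Phi = n\Sigma$ and the pseudo-inverse identity $\Sigma^{+}\Sigma\Sigma^{+} = \Sigma^{+}$,
\[
    \norm{\vec{a}_j}_2^2 = \frac1{n^2}\vec{e}_j^\top \Sigma^{+}\Phi^\top\Phi\Sigma^{+}\vec{e}_j = \frac1n \vec{e}_j^\top \Sigma^{+}\Sigma\Sigma^{+}\vec{e}_j = \frac1n (\Sigma^{+})_{jj}.
\]
Since $\Sigma^{+}$ is positive semi-definite, each diagonal entry is at most its largest eigenvalue, which by definition of the pseudo-inverse equals $1/\lmin$; therefore $\norm{\vec{a}_j}_2^2 \le \frac1{n\lmin}$, and the tail bound above becomes $\P{\abs{\vec{a}_j^\top\vec{\xi}} \ge \epsilon} \le 2\exp\paren*{-\frac{n\lmin\epsilon^2}{2\sigma^2}}$.

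Finally, a union bound over $j \in [p]$ gives
\[
    \P{\max_{j}\abs{\tfrac1n(\Sigma^{+}\Phi^\top\vec{\xi})_j} \ge \epsilon} \le 2p\exp\paren*{-\frac{n\lmin\epsilon^2}{2\sigma^2}},
\]
and the right-hand side is at most $\delta$ exactly when $\frac{n\lmin\epsilon^2}{2\sigma^2} \ge \log(2p/\delta)$, i.e. $n \ge \frac{2\sigma^2}{\epsilon^2\lmin}\log(2p/\delta)$, which is the hypothesis; taking complements yields the claim. I expect the only points needing care to be the pseudo-inverse algebra in the displayed identity (in particular that projecting out the nullspace directions is harmless here) and the elementary fact that a PSD matrix's diagonal is dominated by its top eigenvalue; the remainder is the textbook sub-Gaussian tail together with a union bound.
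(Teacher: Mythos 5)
Your proposal is correct and follows essentially the same route as the paper's proof: both express each coordinate as a fixed linear functional of $\vec{\xi}$, identify the variance proxy as $\sigma^2$ times $\frac1n(\Sigma^{+})_{jj}$ via the identity $\Sigma^{+}\Sigma\Sigma^{+}=\Sigma^{+}$, bound that diagonal entry by $\frac{1}{n\lmin}$, and finish with the sub-Gaussian tail bound and a union bound over the $p$ coordinates. The only cosmetic difference is that you bound the diagonal entry directly by the top eigenvalue of the PSD matrix $\Sigma^{+}$, whereas the paper passes through $\max_{ij}\abs{\Sigma^{+}_{ij}}\le\norm{\Sigma^{+}}_2$; both give the same constant.
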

\begin{proof}
    Define $\vec{\alpha} = \frac1n \Sigma^{+} \Phi^\top \vec{\xi}$. Let 
    $A = \frac1n \Sigma^{+} \Phi^\top$, then it is straightforward to check that,
    for any $i$,
    $\alpha_i = \sum_{j} A_{ij}\xi_j$ is sub-Gaussian
    with variance proxy $\sigma^2  \sum_{j} A_{ij}^2$. In addition to this, we have
    \[
            \sum_{j} A_{ij}^2 
                = (AA^\top)_{ii} = \frac1n \Sigma^{+}_{ii}
                \le \frac1n \max_{ij}\abs{\Sigma^{+}_{ij}} \le \frac1n \norm{\Sigma^{+}}_2 = \frac{1}{n \lmin},
    \]
    So the standard tail bound gives, for any $i$,
    $
        \P{\abs{\alpha_i} > \epsilon} \le 2\exp \left(-\frac{n \lmin\epsilon^2}{2\sigma^2}\right)
        $
    and the conclusion follows from a union bound.
\end{proof}

\printbibliography
\end{document}